\newtheorem{definition}{Definition}
\newtheorem{theorem}{Theorem}
\newtheorem{proposition}{Proposition}
\newtheorem{examp}{Example}
\newcommand{\Actions}{\mathcal{A}}
\newcommand{\Operators}{\mathcal{O}}
\newcommand{\Cactions}{\mathcal{C}}
\newcommand{\HtnDomain}{\mathfrak{D}}
\newcommand{\Methods}{\mathcal{M}}
\newcommand{\Method}{\Methods}
\newcommand{\HtnProblem}{\mathcal{P}}
\newcommand{\Lvar}{\mathcal{L}}
\renewcommand{\P}{\mathcal{P}}
\newcommand{\Constraint}{\prec}
\newcommand{\Constraints}{\Constraint}
\newcommand{\tn}{\mathsf{tn}}
\newcommand{\pre}{\mathsf{pre}}
\newcommand{\add}{\mathsf{add}}
\newcommand{\del}{\mathsf{del}}
\newcommand{\plan}{\sigma}
\newcommand{\name}{\mathsf{name}}
\newcommand{\Dtree}{\mathcal{T}}
\newcommand{\Tasksof}{T}
\newcommand{\yield}{\vartheta}
\newcommand{\innernodes}{N_\Dtree}
\newcommand{\insertTask}{t'}
\newcommand{\primTask}{t_p}
\newcommand{\aspplus}{\textsf{+}}
\newcommand{\aspprec}{\ll}
\newcommand{\insertActions}{I}
\newcommand{\improveFunction}{\rho}
\newcommand{\prefer}{\leq_P}
\newcommand{\strictPrefer}{<_P}
\newcommand{\complete}{\tau}
\newcommand{\inverseComplete}{\complete}
\newcommand{\SubTCandidates}{\Delta_t}
\newcommand{\Instances}{\mathcal{I}}
\newcommand{\closure}{\mathsf{closure}}
\def\blfootnote{\xdef\@thefnmark{}\@footnotetext}
\title{Refining HTN Methods via Task Insertion with Preferences (ArXiv Version)}
\author{Zhanhao Xiao,\textsuperscript{\rm 1} Hai Wan,\textsuperscript{\rm 1}\thanks{Corresponding author} Hankui Hankz Zhuo,\textsuperscript{\rm 1}\thanks{Key Laboratory of Machine Intelligence and Advanced Computing, Ministry of Education, China; Guangdong Province Key Laboratory of Big Data Analysis and Processing, China.} Andreas Herzig,\textsuperscript{\rm 2,3} Laurent Perrussel,\textsuperscript{\rm 2,3} Peilin Chen\textsuperscript{\rm 1}\\
\textsuperscript{\rm 1}School of Data and Computer Science, Sun Yat-sen University, Guangzhou, China\\
\textsuperscript{\rm 2}IRIT, Toulouse, France\\
\textsuperscript{\rm 3}University of Toulouse, Toulouse, France\\
%
}
\begin{document}

\maketitle
\looseness =-1
\begin{abstract}
Hierarchical Task Network (HTN) planning is showing its power in real-world planning. Although domain experts have partial hierarchical domain knowledge, it is time-consuming to specify all HTN methods, leaving them incomplete. On the other hand, traditional HTN learning approaches focus only on declarative goals, omitting the hierarchical domain knowledge.
In this paper, we propose a novel learning framework to refine HTN methods via task insertion with completely preserving the original methods.
As it is difficult to identify incomplete methods without designating declarative goals for compound tasks, we introduce the notion of prioritized preference to capture the incompleteness possibility of methods.
Specifically, the framework first computes the preferred completion profile \emph{w.r.t.} the prioritized preference to refine the incomplete methods.
Then it finds the minimal set of refined methods via a method substitution operation.
Experimental analysis demonstrates that our approach is effective, especially in solving new HTN planning instances.
%
\end{abstract}

\section{Introduction}
Hierarchical task network (HTN) planning techniques \cite{erol1994htn} are increasingly used in a number of real-world applications \cite{lin2008web,DBLP:journals/aicom/BehnkeSKBSDDMGB19}.
In the real-world logistics domain, such as Amazon and DHL Global Logistics, the shipment of packages is arranged via decomposition into a more detailed shipment arrangement in a top-down way according to the predefined HTN methods.
In practice, there exist a vast number of cases occurring, such as the delay caused by the weather, leading that it is difficult and time-consuming for humans to find all complete methods for all actions.
This suggests that it is important to learn complete methods. 

Normally, domain experts have partially hierarchical domain knowledge, which possibly is not sufficient to cover all desirable solutions \cite{kambhampati1998hybrid}.
On one hand, with partially hierarchical domain knowledge, a planner may fail to obtain a solution via decomposition according to the given methods.
The main reason lies in that the given method set is incomplete, which includes at least an incomplete method lacking subtasks.
On the other hand, the hierarchical domain knowledge comes from the experience and investigation of domain experts, which normally are supposed to be necessary.
However, the traditional approaches to learning HTN methods, such as \cite{DBLP:conf/aaai/HoggMK08}, only concentrate on declarative goals of compound tasks and omit hierarchically procedural knowledge obtained from domain experts.
Indeed, these procedural knowledge cannot be expressed by only operator structures (action models),
which can be concluded from \cite{Goldman09,holler2014language}.
Therefore, in this paper, we focus on refining HTN methods and keeping the original hierarchical domain knowledge from domain experts.

\begin{figure*}[!htp]
\centering
\includegraphics[width=0.8\linewidth]{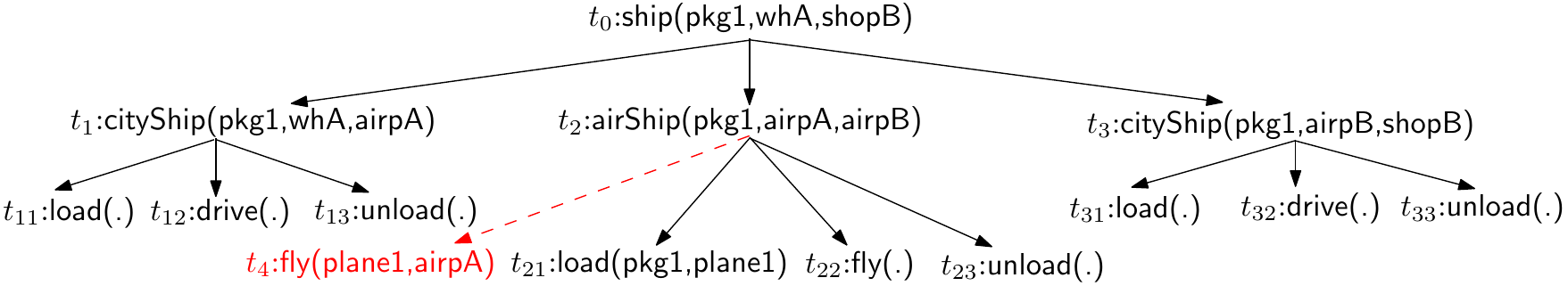}
\caption{An example of a decomposition tree from incomplete methods (the parameters of some actions are hidden). The initial task $\mathsf{ship(pkg1,whA,shopB)}$ is decomposed into a sequence of primitive tasks (the black leaves) according to the original methods. But when plane1 is not in airport A, the sequence is not executable. It becomes executable if arranging plane 1 to airport A before loading the package, which implies that $\mathsf{fly(plane1,airpA)}$ should be considered as a subtask of $\mathsf{airShip}$. }\label{fig:example_of_incomplete_method}
\end{figure*}

To tackle the method incompleteness,
Geier and Bercher (\citeyear{geier2011decidability}) proposed a hybrid planning formalization, \emph{HTN planning with task insertion (TIHTN planning)}, to allow generating plans via decomposing tasks according to the methods but also inserting tasks from outside the given methods.
Example \ref{examp:incomplte_methods} shows a plan with task insertion (called TIHTN plan) for an HTN problem with incomplete methods.
Actually, the TIHTN plan offers a reference to refine the methods by adding the inserted tasks.
For example, the method of $\mathsf{airShip}$ can be refined by adding $\mathsf{fly}$ as its subtask.
It allows us to absolutely preserve the original domain knowledge to refine the incomplete methods, further to assist domain experts to improve the HTN domain.

\begin{examp}\label{examp:incomplte_methods}
Consider an example in the logistics domain, suppose every task has only one method and a decomposition tree is shown in Figure 1.
The initial task $\mathsf{ship(pkg1,whA,shopB)}$ is to ship a package from city A to city B and
it has a method: to ship the package from the warehouse to the airport by truck, from city A to city B by plane and from the airport to the shop by truck.
But in case that the plane is not in the airport of city A, the air transportation task $\mathsf{airShip(pkg1,airpA,airpB)}$ 
cannot be accomplished, neither can the initial task.
When arranging the plane to airport A, $\mathsf{fly(plane1,airpA)}$, is done before loading to the plane, an executable plan is found.
%
\end{examp}

Whereas, even if a TIHTN plan is found, without designating declarative goals for compound tasks,
it is still difficult to identify incomplete methods -- because an inserted task can be considered as a missed subtask of different methods.
For example, the inserted task $\mathsf{fly}$ can also be used to refine the method of $\mathsf{cityShip}$.
In practice, the missing of subtasks happens more likely on certain methods than on some other methods.
It motivates us to introduce  the notion of prioritized preference on methods to capture the incompleteness possibility of methods. 


Our contributions are listed as follows.
First, we introduce the notion of completion profiles to refine HTN methods from TIHTN plans.
Second, we propose a framework $\textsc{MethodRefine}$ to refine HTN methods with completely preserving the original methods.
Specifically,
we first compute the preferred completion profile \emph{w.r.t.} the prioritized preference to refine incomplete methods and then propose a method substitution operation to obtain the preferred set of refined methods.
Third, by taking experiments on three well-known planning domains, we compare our approach with different method incompleteness against the classical HTN learning approach, HTN-MAKER, on the ability of solving new instances in the same domain and show that our approach is more effective. 


\section{Related Work}
Besides HTN-MAKER, there are a number of HTN learning approaches \cite{DBLP:conf/ecai/LotinacJ16,DBLP:journals/ai/ZhuoM014}
which only focus on declarative goals and omit hierarchically procedural knowledge.
Specially, they require annotated preconditions and effects of compound tasks and only consider the declarative goals like classical planning, so they require a complete executable plan as input. Whereas, 
in many domains, it is difficult to verify the correctness of the task annotations. 
Besides, these approaches restrict the tasks in the methods to be totally ordered, while we allow them to be partially ordered.
Last but not least, comparing with these approaches, we also consider the prioritized preference on the learned methods. 

Another related work is \cite{DBLP:conf/kcap/GarlandRR01} which proposes an approach to construct and maintain hierarchical task models from a set of annotated examples provided by domain experts.
Similar to the annotated tasks, obtaining these annotated examples is difficult and needs a lot of human effort.
Our work also is related to the works on learning the precondition of HTN methods \cite{DBLP:conf/icml/IlghamiMNA05,DBLP:conf/aaai/XuM05}, which takes the hierarchical relationships between tasks, the action models, and a complete description of the intermediate states as input. The similar work also includes \cite{DBLP:conf/icml/NejatiLK06} and \cite{ReddyT97}, which used means-end analysis to learn structures and preconditions of the input plans. The precondition and effect of primitive actions can also be learned in \cite{DBLP:conf/ijcai/ZhuoHHYM09}.
All these approaches of learning method precondition require a complete method set as input.

For TIHTN planning,
Geier and Bercher (\citeyear{geier2011decidability}) first addressed the decidability of propositional TIHTN planning.
Later Alford \emph{et al.} (\citeyear{alford2015tighttihtn}) proved that propositional TIHTN planning is EXPTIME-complete and proposed an acyclically searching approach to compute TIHTN plans, which provides a route to refine incomplete HTN methods.

\section{Definitions}

We adapt the definitions of lifted HTN planning \cite{alford2015tighttihtn}.
First, we define a function-free first order language $\Lvar$ from a set of variables and a finite set $\Lvar_0$ of predicates and constants.
We take parts of variables in $\Lvar$ as \emph{task symbols} to identify \emph{tasks}.
A state is any subset of ground atoms in $\Lvar$ and the finite set of states are denoted by $2^\Lvar$.
In HTN planning, actions (or task names), noted $\Actions$, are syntactically first-order atoms,
which are classified into two categories: the actions the agent can execute directly are called \emph{primitive actions} or \emph{operators}, noted $\Operators$, while the rest are called \emph{compound actions}, noted $\Cactions$.
Every primitive action $o$ is a tuple $(\name(o),\pre(o),\add(o),\del(o))$ where 
$\name(o)$ consists of a predicate out of $\Lvar$ and a list of variables, called its name;
$\pre(o)$ is a first-order logic formula, called its precondition; $\add(o)$ and $\del(o)$ are two conflict-free sets of atoms called its positive and negative effect.
Then we define a state-transition function $\gamma: 2^{\Lvar} \times \Operators \longrightarrow 2^{\Lvar}$:
$\gamma(s,o)$ is defined if $o$ is \emph{applicable} in $s$ (\emph{i.e.}, $s \models \pre(o)$);
$\gamma(s,o){=}(s {\setminus} \del(o)) {\cup} \add(o)$.
%
A sequence of primitive actions $o_1{,}{...}{,}o_n $ is \emph{executable} in a state $s_0$ iff there is a state sequence $s_1{,}{...}{,}s_n$ such that $\forall{1 \leq i \leq n}, \gamma(s_{i-1},o_i) {=} s_i$. 

Given a set $R$, we use $\vec{R}$ to denote the set of all sequences over $R$ and use $|R|$ to denote the cardinality of $R$.
For its subset $X$ and a function $f: R \longrightarrow S$, its restriction to $X$ is
$f |_X = \{(r,s) \in f \mid r \in X\}$.
For a binary relation $Q \subseteq R \times R$, we define
its restriction to $X$ by $Q |_X = Q \cap (X \times X)$.


\smallskip
\noindent\textbf{Task networks.}
A task network  
is a tuple $\tn {=} (T,\Constraint,\alpha)$ where
$T$ is a set of tasks, $\Constraint\! {\subseteq}  T{\times} T$ is a non-empty set of ordering constraints over $T$ and $\alpha\!:\!T\!\! \longrightarrow\!\! \Actions$ labels every task with an action.
Every task is associated to an action and the ordering constraints restrict the execution order of tasks.
A task $t$ is called \emph{primitive} if $\alpha(t)$ is primitive (otherwise called \emph{compound}), and called \emph{ground} if $\alpha(t)$ is ground.
A task network is called primitive if it contains only primitive tasks, and called ground if it contains only ground tasks.

\smallskip
\noindent\textbf{HTN methods.}
Compound actions cannot be directly executed and need to be decomposed into a task network according to HTN methods.
An \emph{HTN method} $m{=}(c,{\tn_m})$ consists of 
a compound action $c$ (called \emph{head}) and a task network ${\tn_m}$ whose inner tasks are called \emph{subtasks}.
Generally, an HTN method includes variables, which can be grounded as actions are grounded.
Note that a compound action $c$ may have more than one HTN method. 

Intuitively, decomposition is done by selecting a compound task, adding its subtask network and replacing it.
The constraints about the decomposed task $t$ are propagated to its subtasks: the tasks before $t$ are before all its subtasks and the tasks after $t$ are after all its subtasks.
%
%
%


A task network $\tn \!= \!(T,\Constraint,\alpha)$ is a grounding of another task network $\tn'\!= \!(T',\Constraint',$ $\alpha')$ if
there exists a bijection $f\!:\! T \!\longrightarrow\! T'$ such that
$\alpha(t)$ is a grounding of $\alpha(f(t))$ and for all $t_1 \Constraint t_2$, $f(t_1){\Constraint'} f(t_2)$.

\smallskip
\noindent\textbf{HTN problems.}
An HTN planning \emph{domain} is a tuple $\HtnDomain=(\Lvar,\Operators,\Cactions,\Methods)$ where $\Methods$ is a set of HTN methods and $\Operators \cap \Cactions =\emptyset$.
We call a pair $(s_0,t_0)$ an \emph{instance} where $s_0$ is a ground initial state and $t_0$ is a ground initial task.
An HTN \emph{problem} is a tuple $\P= (\HtnDomain,s_0,t_0)$. 

\smallskip
\noindent\textbf{Solutions.}
A solution to an HTN problem $\HtnProblem = (\HtnDomain,s_0,t_0)$ is a valid decomposition tree $\Dtree$ \emph{w.r.t.}\ $\HtnProblem$ and
we say $(s_0,t_0)$ is solved under $\HtnDomain$ and is satisfied by $\Dtree$.

In different literature, the solution to the HTN problem has different forms: mostly a plan (such as \cite{erol1994htn}), a primitive task network (such as \cite{Behnke2017}) and a list of decomposition trees (such as \cite{DBLP:journals/ai/ZhuoM014}).
In this paper, we consider a solution to the HTN problem as a decomposition tree rooted in the initial task $t_0$.

A decomposition tree (DT) is a tuple $\Dtree = (T,E,\Constraints,\alpha,\beta)$
where
  $(T,E)$ is a tree, with nodes $T$ and with directed edges $E: T \longrightarrow \vec{T}$ mapping each node to an ordered list of its children;
  $\Constraints$ is a set of constraints over $T$;
  function $\alpha: T \longrightarrow \Actions$ links tasks and actions; 
  function $\beta: T \longrightarrow \Method$ labels every inner node with an HTN method. 


We use $\aspprec$ to denote the transitive closure of $\Constraints$ and the order defined by $E$. We say $t_1$ is a predecessor of $t_2$ if $t_1 \aspprec t_2$.
Dually, we also say $t_2$ is a successor of $t_1$.
According to $\aspprec$, we say the sequence constituted by the ground leaf nodes of $\Dtree$ is its plan, denoted by $\yield(\Dtree)$.

\begin{definition}[\textbf{Valid DTs}]\label{def_valid_decomp_tree}
A DT $\Dtree$ is valid \emph{w.r.t.}\ an HTN problem 
$\HtnProblem = (\HtnDomain,s_0,t_0)$
iff its plan $\yield(\Dtree)$ is executable in $s_0$ and
its root is $t_0$ 
and for every inner node $t$ where $\beta(t)=(c,\tn_m)$, it satisfies:
\begin{enumerate}
  \item $\alpha(t)=c$;
  \item 
  $(E(t),\Constraints\!|_{E(t)},\alpha |_{E(t)})$ is a grounding of $\tn_m$;
  \item if $(t, t')\in \Constraints$ then for every $st \in E(t)$, $(st,t') \in \Constraints$; 
  \item if $(t', t)\in \Constraints$ then for every $st \in E(t)$, $(t', st) \in \Constraints$; 
  \item there are no $t_1,t_2$ such that $t_1 \aspprec t_2$ and $t_2 \aspprec t_1$.
\end{enumerate}
\end{definition}

\begin{examp}[Example \ref{examp:incomplte_methods} cont.]\label{examp:solution}
If \textsf{plane1} is already at airport A in $s_0$, the DT drawn with \textbf{black} arrows in Figure \ref{fig:example_of_incomplete_method} is 
a solution to the HTN problem with a plan
$\plan_1=\langle${\small\textsf{load};\textsf{drive};\textsf{unload};\textsf{load};\textsf{fly};\textsf{unload};\textsf{load};\textsf{drive};\textsf{unload}}$\rangle$.
\end{examp}

\section{Refining Methods via Task Insertion}
In this paper, we focus on the HTN problem with an incomplete method set, where there is no valid decomposition tree \emph{w.r.t.}\ the problem.
In other words, there is no executable plan obtained only by applying methods.
By allowing inserting tasks, \cite{geier2011decidability} proposes a hybrid planning formalization, TIHTN planning.
For an HTN problem $\HtnProblem = (\HtnDomain,s_0,t_0)$, we say a primitive action sequence $\plan$ is it TIHTN plan, if $\plan$ is executable in $s_0$ and
there is a valid DT $\Dtree$ \emph{w.r.t.} $\HtnProblem$ whose $\yield(\Dtree)$ is not required to be executable in $s_0$ satisfying $\yield(\Dtree)$ is a sub-sequence of $\plan$.
\begin{examp}[Example \ref{examp:solution} cont.]\label{examp:tihtn_solution}
If \textsf{plane1} is not at airport A in $s_0$, the DT in Example \ref{examp:solution} is not valid as its plan $\plan_1$ is not executable in $s_0$.
While $\plan_2=\langle${\small\textsf{load};\textsf{drive};\textsf{unload};\textbf{fly};\textsf{load};\textsf{fly};\textsf{unload};\textsf{load};\textsf{drive};\textsf{unload}}$\rangle$ is a TIHTN plan to the problem.
\end{examp}

\subsection{Refining Methods and Completing DTs}
Actually, the inserted tasks in the TIHTN plan are subtask candidates: they provide clues for refining the original methods by adding them as subtasks.
Then, based on a TIHTN plan, we propose the completion profile to refine methods and complete decomposition trees.

Suppose the TIHTN planner outputs a plan $\plan$ and its corresponding decomposition tree $\Dtree$, we use $\insertActions_\plan$ to denote all the inserted tasks in $\plan$.
The TIHTN plan actually is an ordering of primitive tasks and we extend the $\aspprec$ relation of $\Dtree$ by considering the execution order of primitive actions in $\plan$.
To get the compound tasks, we use $\innernodes$ to denote the inner nodes of $\Dtree$.
Next, we show how to link these inserted tasks with the inner nodes $\innernodes$ of the decomposition tree $\Dtree$ to generate a new decomposition tree.
%

\begin{definition}
\label{def_complete_profile}
We define a completion profile as a function $\improveFunction\! :\! \insertActions_\plan \longrightarrow \innernodes$, such that for every inserted task $\insertTask \in \insertActions_\plan$
there is not a primitive task $\primTask \in \plan$ where either both $\primTask \! \aspprec\!  \improveFunction(\insertTask)$ and $\insertTask \aspprec \primTask$, or $\improveFunction(\insertTask)  \aspprec \primTask$ and $\primTask  \aspprec  \insertTask$.
\end{definition}

Intuitively, every inserted task is associated with a compound task as its subtask.
Every inserted task is restricted to be performed before the predecessors and after the successors of its corresponding compound task. 
%

Next, we define how to refine a method by inserting tasks.
A completion profile leads to a set of refined methods by adding the relevant inserted tasks into the original methods. 
Formally, for a completion profile $\improveFunction$ and $t\in \innernodes$,
we use $T_\improveFunction^t \!=\! \{\insertTask \mid\! \improveFunction(\insertTask)\!=\!t\}$ 
to denote all inserted tasks associated with $t$.
The inserted subtasks with the original subtasks of $t$ compose a new subtask network, written by $\tn_\improveFunction^t = (T_\improveFunction^t,{\aspprec}|_{T_\improveFunction^t},\alpha_\plan)$, where $\alpha_\plan$ which labels the inserted task with ground primitive actions.
Suppose $m = (c, (T_m,\prec_m,\alpha_m))$ is the method of $t$, \emph{i.e.}, $\beta(t)=m$,
for every constant in the inserted primitve actions which occurs in the ground actions associated with the children of $t$ or $\alpha(t)$,
we replace it with its corresponding variable in the unground action in $\tn_m$ and update the function $\alpha_\plan$ to $\alpha'_\plan$.
Then we define the refined method of $m$ as $m_\improveFunction^t \!= \! (c, (T_m \cup T_\improveFunction^t, \prec_m \!\cup\! \aspprec\!|_{T_m \cup T_\improveFunction^t}, \alpha_m \cup \alpha'_\plan))$
\emph{w.r.t.}\ $\improveFunction$.
We use $\Methods_{\improveFunction}$ to denote the set of refined methods from the completion profile $\improveFunction$.

\begin{examp}[Example \ref{examp:tihtn_solution} cont.]
We have a completion profile $\improveFunction$ where $\improveFunction(t_4)\!=\!t_2$. 
The refined method of the original method $m$ is $m_\improveFunction^{t_2}{=}\mathsf{(airShip(?pkg{,}?loc1{,}?loc2)},(T'{,}{\prec}'{,}\alpha'))$ where
\begin{itemize}
\item $T' = \{t_4',t_{21}',t_{22}',t_{23}'\}$;
\item $\prec' = \prec_m \cup \{(t_4',t_{21}'),(t_4',t_{22}'),(t_4',t_{23}')\}$;
\item $\alpha'(t_4') = \mathsf{fly(?plane,?loc1)}$,\\ $\alpha'(t_{21}')= \mathsf{fly(?pkg,?plane,?loc1)}$, etc.
\end{itemize}
\end{examp}

The completion profile actually completes the DT: the inserted tasks are connected with their corresponding inner nodes as their children.
When we add new nodes into the DT, the integrity of ordering constraints will be destroyed.
To avoid that, we define an operator $\closure$ to complete the ordering constraints.
Formally, for a tree $\Dtree=(T,E)$, we define its closure on the ordering constraint $\prec$ as $\closure(T,E,\prec)$, given by:
  $$
  \prec \cup  \bigcup_{t\in T}\{(t',ch),(ch,t'')| ch \in E(t), t'\prec t, t\prec t''\}.$$
Intuitively, the closure operation completes the ordering constraints about the children which should be inherited from their parent.

Next, 
we define the completion of DT $\Dtree$ by completion profile $\improveFunction$ \emph{w.r.t.}\ TIHTN plan $\plan$ as $\Dtree_\improveFunction = (T',E',{\prec',}$ $\alpha',\beta')$, which is given by:
\begin{align*}
  T' &:= T \cup  \textstyle \bigcup_{t \in \Tasksof(\improveFunction)} T_\improveFunction^t \\
  E' &:= E \cup  \{(t,st)\mid t\in T, st\in\Tasksof(\tn_\improveFunction^t)\} \\
  \ \prec'\! &:= \closure(T',E',\prec) \cup \textstyle \bigcup_{t\in\Tasksof(\improveFunction)}\aspprec\!\!|_{T^t_\improveFunction \cup E(t)}\\
  \alpha' &:= \alpha \cup \alpha_{\plan}\\
  \beta' &:= (\beta \setminus \{(t,m)\mid t \in \Tasksof(\improveFunction)\}) \cup \{(t,m_\improveFunction^t)\mid t \in \Tasksof(\improveFunction)\}
\end{align*}
where $\Tasksof(\improveFunction)$ is the set of the inner nodes associated by $\improveFunction$.

The procedure of completing a DT consists of first connecting the inserted tasks with the inner nodes, then completing the ordering constraints and finally updating the method applied as the refined method.
The DT being completed will satisfy the instance: 

\begin{proposition}\label{prop:complete_tree}
Given an HTN problem $\HtnProblem {=} (\HtnDomain,s_0,t_0)$, let $\plan$ be one of its TIHTN plans, $\Dtree$ be its corresponding DT and $\improveFunction$ be one of their completion profiles. Then the completed DT $\Dtree_{\improveFunction}$ satisfies $(s_0,t_0)$ under the new domain $\HtnDomain \aspplus \Methods_{\improveFunction}$.
\end{proposition}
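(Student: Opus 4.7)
The plan is to verify the five conditions of Definition \ref{def_valid_decomp_tree} for $\Dtree_\improveFunction$ with respect to the new HTN problem $(\HtnDomain \aspplus \Methods_\improveFunction, s_0, t_0)$. Since the construction of $\Dtree_\improveFunction$ only adds children under existing inner nodes of $\Dtree$, the root is unchanged, so it is still $t_0$. The executability of its yield in $s_0$ reduces to showing that $\yield(\Dtree_\improveFunction)$ coincides (up to reordering consistent with $\prec'$) with the TIHTN plan $\plan$: the leaves of $\Dtree_\improveFunction$ are exactly the original primitive leaves of $\Dtree$ together with the inserted tasks $\insertActions_\plan$, and the extended ordering $\prec'$ is precisely the restriction to these leaves of the order induced by $\plan$. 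Since $\plan$ is executable in $s_0$ by the definition of a TIHTN plan, so is $\yield(\Dtree_\improveFunction)$.

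For the per-node conditions, I would split inner nodes into two classes. For $t \notin \Tasksof(\improveFunction)$, $\beta'(t) = \beta(t)$ and $E'(t) = E(t)$, so conditions (1)--(2) hold because they already held in $\Dtree$; conditions (3)--(4) hold because the closure operation only propagates ordering constraints about $t$ to its children, matching exactly what the validity definition requires. For $t \in \Tasksof(\improveFunction)$, condition (1) is immediate since $\alpha'(t) = \alpha(t) = c$ and the head of $m_\improveFunction^t$ is still $c$. Condition (2) follows by construction of $m_\improveFunction^t$: the subtask network $\tn_\improveFunction^t$ is unified with the original $\tn_m$ by replacing constants by the corresponding variables via the updated $\alpha'_\plan$, so the new child set $E'(t) = E(t) \cup T_\improveFunction^t$ with the restricted constraints is a grounding of the new subtask network of $m_\improveFunction^t$.

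Conditions (3) and (4) are the role the $\closure$ operator plays: any constraint $(t', t)$ or $(t, t')$ in $\prec$ is propagated to every child of $t$ in $E'(t)$, whether originally in $E(t)$ or freshly inserted through $T_\improveFunction^t$. This is exactly how $\closure(T', E', \prec)$ is defined, so the two inheritance conditions are discharged directly.

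The main obstacle, and the step I would treat most carefully, is acyclicity (condition (5)). The new relation $\prec'$ merges three sources: the original $\prec$, the linear order from $\plan$ restricted to inserted tasks and their surrounding primitives, and the propagations from $\closure$. A potential cycle would have to traverse an inserted task $\insertTask$ with $\improveFunction(\insertTask) = t$: after closure, $\insertTask$ inherits every constraint on $t$, so a cycle would require some primitive $\primTask$ with $\primTask \aspprec \improveFunction(\insertTask)$ and $\insertTask \aspprec \primTask$, or symmetrically on the other side. But Definition \ref{def_complete_profile} explicitly forbids exactly this configuration, which is precisely why the condition is stated that way. Hence no cycles can be introduced, and the five conditions together yield that $\Dtree_\improveFunction$ is valid w.r.t. $(\HtnDomain \aspplus \Methods_\improveFunction, s_0, t_0)$, so $(s_0, t_0)$ is satisfied by $\Dtree_\improveFunction$ under $\HtnDomain \aspplus \Methods_\improveFunction$.
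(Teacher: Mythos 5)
Your proposal is correct and follows essentially the same route as the paper's proof: verify the five conditions of Definition \ref{def_valid_decomp_tree} (using the closure operator for conditions 3--4 and the defining restriction on completion profiles for acyclicity), note the root is unchanged, and observe that the yield of $\Dtree_\improveFunction$ is the executable TIHTN plan $\plan$. Your treatment is somewhat more explicit than the paper's, particularly on condition 5, but the underlying argument is the same.
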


\begin{proof}
First, we show that $\Dtree_{\improveFunction}$ is a valid DT \emph{w.r.t.}\ $\HtnDomain \aspplus \Methods_{\improveFunction}$.
For every node $t$ in $\Dtree_\improveFunction$ with $\beta'(t)\!=\!(c,\tn_\improveFunction^t)$, i) the function $\alpha$ is not reduced, so $\alpha'(t)\!=\!c$; ii) the edges between the task $t$ and its inserted tasks $T_\improveFunction^t$ are added, so the task network induced by its children is a grounding of $m_\improveFunction^t$; iii) $\closure(T',E',\prec)$ guarantees that all ordering constraints of $t$ are propagated to the inserted tasks and ${\aspprec}|_{T^t_\improveFunction \cup E(t)}$ only introduces the ordering constraints among the subtasks in the same method, so conditions 3. and 4. are satisfied;
iv) as the completion profile guarantees that no contradict pair about $\aspprec$ is introduced, condition 5. is satisfied.

Without removing nodes, the root of $\Dtree_{\improveFunction}$ is still $t_0$.
As the plan $\yield(\Dtree_{\improveFunction})$ is the TIHTN plan $\plan$ executable in $s_0$, $\Dtree_{\improveFunction}$ satisfies the instance $(s_0,t_0)$.
\end{proof}

When an HTN problem has incomplete methods, the completion profile offers a way to improve the HTN domain: 

\begin{theorem}\label{thm:repair_htn}
If an HTN problem $\P{=}(\HtnDomain,s_0,t_0)$
has a TIHTN plan but no solution, then there is a completion profile $\improveFunction$ where the HTN problem $\P'=(\HtnDomain \aspplus \Methods_\improveFunction,s_0,t_0)$ is solvable.
\end{theorem}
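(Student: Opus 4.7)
The plan is to exhibit a completion profile $\improveFunction$ from a given TIHTN plan of $\P$ and then invoke Proposition~\ref{prop:complete_tree}.

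By hypothesis there exist a TIHTN plan $\plan$ of $\P$ together with a DT $\Dtree$ that is valid \emph{w.r.t.}\ $\P$ after dropping the executability requirement on $\yield(\Dtree)$, such that $\yield(\Dtree)$ is a subsequence of $\plan$ and $\plan$ itself is executable in $s_0$. Let $\insertActions_\plan$ be the set of inserted tasks; since $\P$ admits no HTN solution yet has a DT with at least one decomposition, $\innernodes \neq \emptyset$ and in particular the root $t_0$ is compound. I would define $\improveFunction : \insertActions_\plan \longrightarrow \innernodes$ pointwise. For each inserted $\insertTask$ at plan-position $p$, let $t_L$ be the DT-leaf of maximal plan-position strictly less than $p$ and $t_R$ the DT-leaf of minimal plan-position strictly greater than $p$; set $\improveFunction(\insertTask)$ to the lowest common ancestor of $t_L$ and $t_R$ in $\Dtree$, with the root $t_0$ as fallback whenever one of $t_L, t_R$ is absent (i.e.\ $\insertTask$ sits at an extreme of $\plan$). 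Intuitively, $\insertTask$ is attached to the deepest compound task whose leaf descendants already straddle its plan-position.

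To verify that $\improveFunction$ satisfies Definition~\ref{def_complete_profile}, suppose a primitive $t_p \in \plan$ satisfies $t_p \aspprec \improveFunction(\insertTask)$. Propagating ordering constraints through the children of $\improveFunction(\insertTask)$ via conditions 3 and 4 of Definition~\ref{def_valid_decomp_tree}, and iterating down to the leaves, yields $t_p \aspprec t_R$; hence $t_p$ has strictly smaller plan-position than $t_R$, and in particular strictly less than $p$, ruling out $\insertTask \aspprec t_p$. The symmetric argument via $t_L$ rules out the second forbidden pattern. The root fallback is trivially safe because $t_0$ has neither $\aspprec$-predecessors nor $\aspprec$-successors (no siblings and no external constraints). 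With $\improveFunction$ a valid completion profile, Proposition~\ref{prop:complete_tree} supplies a DT $\Dtree_\improveFunction$ satisfying $(s_0, t_0)$ in the refined domain $\HtnDomain \aspplus \Methods_\improveFunction$, so $\P'$ is solvable.

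The main obstacle is the hybrid nature of $\aspprec$: it combines the DT's tree-plus-sibling partial order with the linear order induced by $\plan$ on primitive tasks, so the LCA-based assignment must be shown to lie inside the ``safe region'' near each inserted task under both orderings simultaneously. This is exactly what constraint propagation to children (Definition~\ref{def_valid_decomp_tree}) secures for the original DT-leaves, while the root fallback handles the corner cases where no descendant of a deeper compound task straddles the inserted position.
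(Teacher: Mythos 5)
Your proposal is correct in substance and follows the route the paper intends: exhibit a completion profile for the given TIHTN plan and its DT, then invoke Proposition~\ref{prop:complete_tree}. The paper itself only writes ``Straightforward'' here, so your write-up is a genuine elaboration rather than a different argument. Two remarks. First, your LCA construction is more machinery than the theorem needs: Definition~\ref{def_complete_profile} is satisfied by the trivial profile that maps every inserted task to the root $t_0$ (which is an inner node since the DT applies at least one method, and which has no $\aspprec$-predecessors or $\aspprec$-successors); this is exactly the ``worst case'' the paper notes for Algorithm~\ref{algor:complete}, and it already yields Theorem~\ref{thm:repair_htn}. Your finer assignment is of course also valid and is closer in spirit to what \textsc{Complete} actually computes. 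Second, there is a small gloss in your verification: from $\primTask \aspprec t_R$ you conclude that $\primTask$'s plan-position is ``in particular strictly less than $p$,'' but the minimality of $t_R$ only forbids \emph{DT-leaves} between position $p$ and $t_R$; a priori $\primTask$ could be another \emph{inserted} task sitting in that gap. The step is repairable: since inserted tasks carry no DT constraints, any $\aspprec$-chain from a primitive $\primTask$ to the inner node $\improveFunction(\insertTask)$ must pass through a DT-leaf $t_q$ with $\primTask \aspprec t_q \aspprec \improveFunction(\insertTask)$ (or $\primTask$ is itself such a leaf), and propagation gives $t_q \aspprec t_R$, contradicting the minimality of $t_R$ among DT-leaves past $p$. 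With that patch the verification is complete.
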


\begin{proof}
Straightforward.
\end{proof}

\subsection{Prioritized Preferences}


To formalize the phenomenon that the missing of subtasks happens more likely on some methods than other methods,
we consider a prioritization on the methods. 

%

Given a method set $\Methods$, we define a prioritization as a partition on it: $P {=} \langle P_1,...,P_n \rangle$ where $\bigcup_{1\leq j \leq n} P_j =\Methods$. 
Intuitively, the HTN methods in $P_i$ have a higher priority to be refined than those in $P_j$ if $i > j$.
We further consider the prioritized preference $\leq_P$ in terms of cardinality: 
%
for $\Methods_1,\Methods_2\subseteq \Methods$,
if 
there is some $1 \!\leq i\! \leq n$ such that
$|\Methods_1 \!\cap\! P_i| \!\leq\! |\Methods_2 \!\cap\! P_i|$ and that
$\forall\ 1 \!\leq\! j \!<\! i,  |\Methods_1 \!\cap\! P_j| \!=\! |\Methods_2 \!\cap\! P_j|$,
then we write $\Methods_1 \!\leq_P\! \Methods_2$.
We say $\Methods_1$ is strictly preferred over $\Methods_2$ \emph{w.r.t.}\ $P$, written by $\Methods_1 \!<_P\! \Methods_2$, if $\Methods_1 \!\prefer\! \Methods_2$ and $\Methods_2 \!\not\prefer\! \Methods_1$.

Generally, the prioritization comes from the confidences of domain experts on methods: the method believed to lack subtasks more likely to have a higher priority to be refined.
In particular, there exists a class of HTN domains where actions can be stratified according to the decomposition hierarchy \cite{erol1996complexity,DBLP:conf/socs/AlfordSKN12}.
We found an interesting phenomenon that the more detailed tasks are more sensitive to these situations and more easily to be thoughtless.
In this case, we assume that an action is more abstract than its subtasks and we consider a preference in terms of a stratum-based prioritization: the more abstract actions have a lower priority to be refined.
\subsection{Preferred Completion Profiles}
Generally, we hope to find a completion profile changing the original methods minimally under the prioritized preference.

We first define some notations:
for a refined method $m^t_\improveFunction$, we use $\inverseComplete(m^t_\improveFunction)$ to denote its original method $m$. 
For a refined method set $\Methods'$, we use $\inverseComplete(\Methods')$ to denote all the original methods of the refined methods in $\Methods'$, \emph{i.e.}, $\inverseComplete(\Methods')\!=\!\{m \!\in\! \Methods | \ m \!=\! \inverseComplete(m'), m' \in \Methods'\}$.
Note that several completions may be associated with the same HTN method.
For two HTN methods $m'_1$ and $m'_2$, if $\inverseComplete(m'_1) = \inverseComplete(m'_2)$, we say $m'_1$ and $m'_2$ are \emph{homologous}.


\begin{definition}
Given a TIHTN plan and its corresponding DT, a completion profile $\improveFunction$ is preferred \emph{w.r.t.}\ preference $P$ if there is not a completion profile $\improveFunction'$ s.t. $\inverseComplete(\Methods_{\improveFunction'}) \strictPrefer \inverseComplete(\Methods_{\improveFunction})$. 
\end{definition}
Intuitively, the preferred completion profile refines methods minimally under the prioritized preference.
%

Next, we will show how to find the preferred completion profile, as shown in Algorithm \ref{algor:complete}.
First, we consider all inserted tasks in the plan as unlabelled (line 1).
Then we scan all inner nodes from the nodes with a method of higher priority to the nodes with a method of lower priority (line 2-3).
Next, for an inner node, we find the set of candidate subtasks $\SubTCandidates$ from the inserted tasks, which do not violate the ordering constraints if they were inserted as its subtasks (line 5).
More specifically, for the inner node $t$, the inserted tasks which are executed between the last task required to be executed ahead of $t$ and the first task required to be after $t$, are allowed to be added as subtasks of $t$.
According to the total order `$\aspprec$' in the DT,
we define the subtasks candidate set $\SubTCandidates$ of $t$ as the set of the unlabelled inserted tasks between the last predecessor of $t$ and the first successor of $t$.
Finally, we associate all tasks in the subtask candidate set to $t$ (line 5) and label them as subtasks (line 6).
When all inserted tasks are labeled, it returns a preferred completion profile.
It must terminate and the worst case is that the inserted tasks are associated with the root task.

Algorithm \ref{algor:complete} only scans the inner nodes of the DT once and searching the subtask candidate set can be done in linear time, so the algorithm terminates in polynomial time.


\begin{algorithm}
\setlength{\abovecaptionskip}{0mm}   
\setlength{\belowcaptionskip}{-5mm}
  \caption{$\textsc{Complete}(\plan,\Dtree,P)$}\label{algor:complete}
\Input {A TIHTN plan $\plan$, its DT $\Dtree$ and a prioritization $P=(P_1,...,P_n)$ on $\Methods$}
\Output {A completion profile $\improveFunction$}
$I \gets I_\plan$\;
\For {$j \leftarrow n$ to $1$}
    {
    \For {each $t \in \innernodes$ s.t. $\beta(t) \in P_j$}
        {
        \If {$\insertActions \neq \emptyset$}
            {
            for every $t' \in \SubTCandidates \cap \insertActions$, 
            set $\improveFunction(t') = t$\;
            $\insertActions \gets \insertActions \setminus \SubTCandidates$\;
            }
        }
}
\textbf{return} $\improveFunction$
\end{algorithm}

Actually, to find a preferred completion profile, we only need to scan the inner nodes in the DT according to the preference and link appropriate inserted tasks with inner nodes, which can be done in polynomial time.


\section{Refining Methods from Instances}
As stated above, we focus on keeping the original methods coming from domain experts and consider adding the refined methods into the original domain.
For an HTN domain $\HtnDomain = (\Lvar,\Operators,\Cactions,\Methods)$ and a method set $\Methods'$, we use  $\HtnDomain \aspplus \Methods'=(\Lvar,\Operators,\Cactions,\Methods\cup \Methods')$ to denote the resulting domain by adding $\Methods'$ into $\HtnDomain$.
An HTN method refining problem is defined as a tuple $(\HtnDomain,\Instances)$ where $\HtnDomain$ is an HTN domain and $\Instances$ is a set of instances.

However, an excess of methods introduced may slow down problem-solving significantly, as there are excessive choices to decompose tasks.
So, we hope the refined methods to be as minimal as possible.
Then we define a solution of the HTN method refining problem $(\HtnDomain,\Instances)$ is a set of refined methods $\Methods'$ which should satisfy:
\begin{itemize}
\item all instances in the set $\Instances$ are solvable under $\HtnDomain \aspplus \Methods'$;
\item the refined method set $\Methods'$ is as minimal as possible;
\item the refined methods in $\Methods'$ have as few inserted subtasks as possible.
\end{itemize}

Given a set of HTN instances, we first compute a TIHTN plan and a DT for each instance and then compute the preferred completion profile.
Indeed, the completion profiles for various instances induce many different refined methods with the same head which possibly handel the same situation.
Such refined methods are redundant because they can be replaced by other methods.

To compute the minimal set, we define a method substitution operation for a decomposition tree.
More specifically, for a DT $\Dtree$ and two homologous refined methods $m'_1$ and $m'_2$, we use $\mathsf{sub}(\Dtree,m'_1,m'_2)$ to denote the resulting DT obtained by replacing every subtree induced by $m'_1$ with a subtree by $m'_2$ and completing the closure of the ordering constraints.
Then for an instance $(s_0,t_0)$, if $\Dtree$ satisfies it and $\yield(\mathsf{sub}(\Dtree,m'_1,m'_2))$ is executable in $s_0$, then $\mathsf{sub}(\Dtree,m'_1,m'_2)$ also satisfies it.
In other words, for this instance, the refined method $m'_1$ is replaceable by $m'_2$.

Then we generalize the method substitution operation into sets:
given a DT set $\overline{\Dtree}$ and two refined method sets $\Methods'_1,\Methods'_2$,
we use $\mathsf{sub}(\overline{\Dtree},\Methods'_1,\Methods'_2)$ to denote the set of the DTs that substitutes every method in $\Methods'_1$ with certain method in $\Methods'_2$.
If each resulting DT still satisfies the corresponding instance, we say $\Methods'_1$ is replaceable by $\Methods'_2$.
Given a set $\Methods'$ of refined methods, for the minimality, we need to compute the minimal subset $\Methods''$ of $\Methods'$ which are not replaceable by any strict subset of $\Methods''$.

Indeed, it is difficult to find the minimal method set under the replaceability relation between refined methods, as all subsets of the refined method set require to be considered.
Fortunately, the prioritized preference indicates what methods should be refined in a higher priority.
The prioritized preference can be extended to the refined methods: $m'{\in} P_j$ if $\inverseComplete(m')\in P_j$.
Then we seek for the sub-optimal solution
by computing the minimal subset \emph{w.r.t.} the replaceability relation under the refined methods with the same priority, which reduces the searching space significantly.


Next, we give an algorithm to refine methods for a set of HTN problems and a given prioritization, as shown in Algorithm \ref{algor:method_learn}.
The framework consists of two main components: the first iteration for refining methods (line 2-8) and the second iteration for reducing refined methods (line 9-11).

We developed the TIHTN planning approach in \cite{alford2015tighttihtn} (noted \textsc{HPlan}).
In order to pursue as few inserted subtasks as possible in refined methods, we exploited a breadth-first strategy to search inserted tasks.

In the first iteration, it first invokes \textsc{HPlan} to compute TIHTN plan and the corresponding DTs (line 3) and then computes preferred completion profiles (line 4) by \textsc{Complete}.
According to these completion profiles, it completes these DTs and constructs a set of refined methods.

In the second iteration, we use a greedy strategy to find the minimal set: the refined methods with lower priority are reduced first, which is the opposite against the procedure of searching the preferred completion profile. 
Here we use $P_j[\Methods']$ to denote the refined methods in $\Methods'$ with the priority $P_j$.
The algorithm enumerates the power set of $P_j[\Methods']$ and computes the minimal subset $\Methods'_j$ of $P_j[\Methods']$ \emph{w.r.t.} the replaceability relation under $P_j[\Methods']$ and the union of the minimal subsets with lower priorities.

%

\begin{algorithm}[!htp]
  \caption{$\textsc{MethodRefine}(\HtnDomain,\Instances,P)$}\label{algor:method_learn}
\Input {An HTN domain $\HtnDomain$, an instance set $\Instances$ and a prioritization $P=(P_1,...,P_n)$ on $\Methods$}
\Output {A set of refined methods $\Methods''$}
 $\Methods' \leftarrow \Methods'' \leftarrow \emptyset$;\hspace{1.5em}
 $\overline{\Dtree} \leftarrow \emptyset$\;
\For {each $i$ in $\Instances$}
    {
      compute a plan and DT $(\plan^i,\Dtree^i)=\textsc{HPlan}(\HtnDomain,i)$\;
      $\improveFunction^i = \textsc{Complete}(\plan^i,\Dtree^i,P)$\;
      complete the DT $\Dtree^i$ to $\Dtree^i_{\improveFunction}$ by $\improveFunction^i$\;
      $\overline{\Dtree} \leftarrow \overline{\Dtree} \cup \Dtree^i_{\improveFunction}$\;
      construct a new method set $\Methods^i_\improveFunction$ from $\improveFunction^i$\;
      $\Methods' \leftarrow \Methods' \cup \Methods^i_\improveFunction$\;
    }
\For {$j \leftarrow 1$ to $n$}
    {
     compute the minimal subset $\Methods'_j$ \emph{w.r.t.} the replaceability relation under $P_j[\Methods'] \cup \Method''$\;
 $\Methods'' \gets \Methods'' \cup \Methods'_j$\;
    } 
\textbf{return} $\Methods''$
\end{algorithm}

In fact, due to the greedy strategy, our approach cannot guarantee criterion 2 and 3,
but must satisfy criterion 1:

\begin{theorem}
Suppose $\Methods''$ is the method set refined by \textsc{MethodRefine}$(\HtnDomain,\Instances,P)$,
if every instance in $\Instances$ has a TIHTN plan under the domain $\HtnDomain$, then it is also solvable under the domain $\HtnDomain \aspplus \Methods'$.
\end{theorem}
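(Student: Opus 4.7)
The plan is to split the argument into the two phases of Algorithm \ref{algor:method_learn}: first establish that after the refinement loop (lines 2--8) every instance is solvable under $\HtnDomain \aspplus \Methods'$, and then show that the reduction loop (lines 9--11) preserves this solvability when passing from $\Methods'$ to $\Methods''$. I will treat the closing $\Methods'$ in the statement as $\Methods''$ (the returned set), since otherwise the claim is vacuous given the hypothesis.

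For the first phase, I would fix any instance $i=(s_0,t_0)\in\Instances$. By hypothesis $i$ has a TIHTN plan under $\HtnDomain$, so \textsc{HPlan} successfully returns a pair $(\plan^i,\Dtree^i)$ on line 3, and \textsc{Complete} returns a completion profile $\improveFunction^i$ on line 4. Proposition \ref{prop:complete_tree} then applies directly: the completed tree $\Dtree^i_{\improveFunction^i}$ is a valid DT for $i$ under $\HtnDomain\aspplus\Methods^i_\improveFunction$. Since $\Methods^i_\improveFunction\subseteq\Methods'$ and adding further methods cannot invalidate an existing DT (validity is a local condition on the nodes of $\Dtree^i_{\improveFunction^i}$, and the operator set and constants are unchanged), $\Dtree^i_{\improveFunction^i}$ also witnesses solvability of $i$ under $\HtnDomain\aspplus\Methods'$.

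For the second phase, I would argue by induction on the priority levels processed by the inner loop, maintaining the invariant that after iteration $j$ each instance $i$ still admits a valid DT $\widetilde{\Dtree}^i_j$ under $\HtnDomain \aspplus \bigl(\Methods'\setminus(P_1[\Methods']\cup\cdots\cup P_j[\Methods'])\bigr)\aspplus\bigl(\Methods'_1\cup\cdots\cup\Methods'_j\bigr)$. The base case $j=0$ is exactly the conclusion of the first phase. For the inductive step, the algorithm chooses $\Methods'_j$ to be a minimal subset of $P_j[\Methods']$ such that $P_j[\Methods']\cup\Methods''_{<j}$ is replaceable by $\Methods'_j\cup\Methods''_{<j}$ in the sense defined just before Algorithm \ref{algor:method_learn}; by the definition of the set-level substitution $\mathsf{sub}(\overline{\Dtree},\cdot,\cdot)$, this means every DT using a method of $P_j[\Methods']$ can be rewritten so that only methods of $\Methods'_j$ (together with unchanged higher-priority methods) occur, while remaining a valid DT for the same instance. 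Applying this rewriting to each $\widetilde{\Dtree}^i_{j-1}$ yields the required $\widetilde{\Dtree}^i_j$. After the final iteration $j=n$, the surviving method set is exactly $\Methods''=\Methods'_1\cup\cdots\cup\Methods'_n$, so each instance is solvable under $\HtnDomain\aspplus\Methods''$.

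The main obstacle will be a careful reading of the replaceability condition in the algorithm: one must confirm that ``minimal subset of $P_j[\Methods']$ \emph{w.r.t.} the replaceability relation under $P_j[\Methods']\cup\Methods''$'' really is chosen so that homologous substitutes remain available for every occurrence of a discarded method in the DTs produced so far, rather than only for DTs built from $P_j[\Methods']$ in isolation. Provided the algorithm is read in this global way (which is consistent with the use of the DT collection $\overline{\Dtree}$ maintained throughout), the inductive step goes through essentially by unfolding the definition of $\mathsf{sub}(\overline{\Dtree},\Methods'_1,\Methods'_2)$; the rest of the proof is routine bookkeeping.
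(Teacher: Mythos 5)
Your proposal is correct and follows essentially the same route as the paper's own proof: apply Proposition \ref{prop:complete_tree} to get, for each instance, a completed DT valid under $\HtnDomain \aspplus \Methods'$, and then argue that the reduction phase preserves satisfiability because $\Methods'$ is replaceable by $\Methods''$ via the $\mathsf{sub}$ operation. Your version is merely more explicit (the monotonicity of adding methods, and the induction over priority levels), and you are right that the final $\Methods'$ in the statement should be read as $\Methods''$.
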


\begin{proof}
As every instance has a TIHTN plan, by Proposition \ref{prop:complete_tree}, there exists a set of DTs $\overline{\Dtree}$, each of which satisfies each instance \emph{w.r.t.}\ the domain $\HtnDomain \aspplus \Methods'$ where $\Methods'$ is a method set obtained via completion profiles. 
As in the second iteration, the algorithm keeps the satisfiability of the instances, $\Method'$ is replaceable by $\Methods''$ and each DT in $\mathsf{sub}(\overline{\Dtree},\Methods'',\Methods')$ satisfies its corresponding instance in $\Instances$. Thus, every instance is solvable \emph{w.r.t.}\ the new domain $\HtnDomain \aspplus \Methods'$.
\end{proof}

\begin{figure*}[!htp]
\centering
\subfigure[The Logistics Domain]{\includegraphics[width=0.33\textwidth]{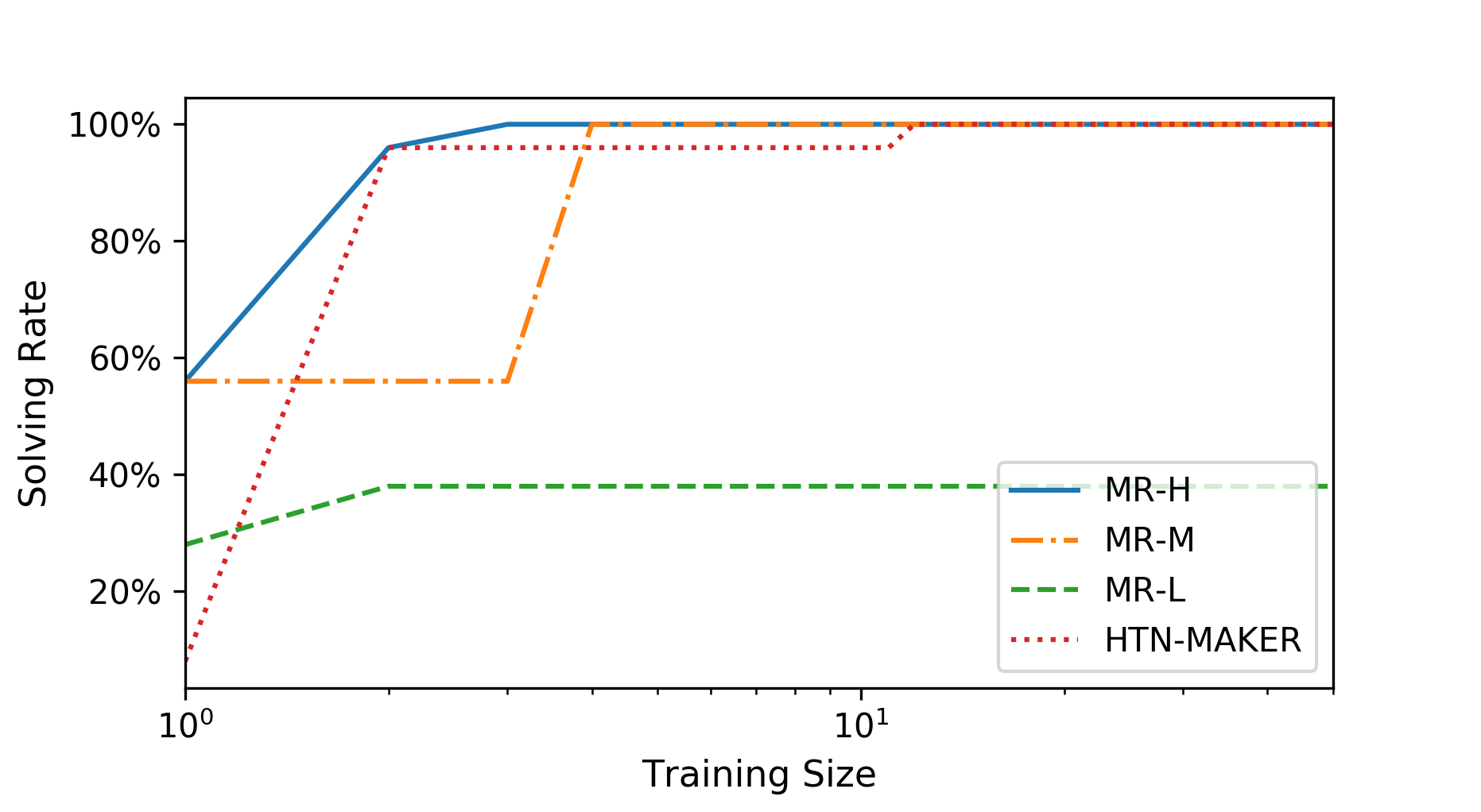}}
\subfigure[The Satellite Domain]{\includegraphics[width=0.33\textwidth]{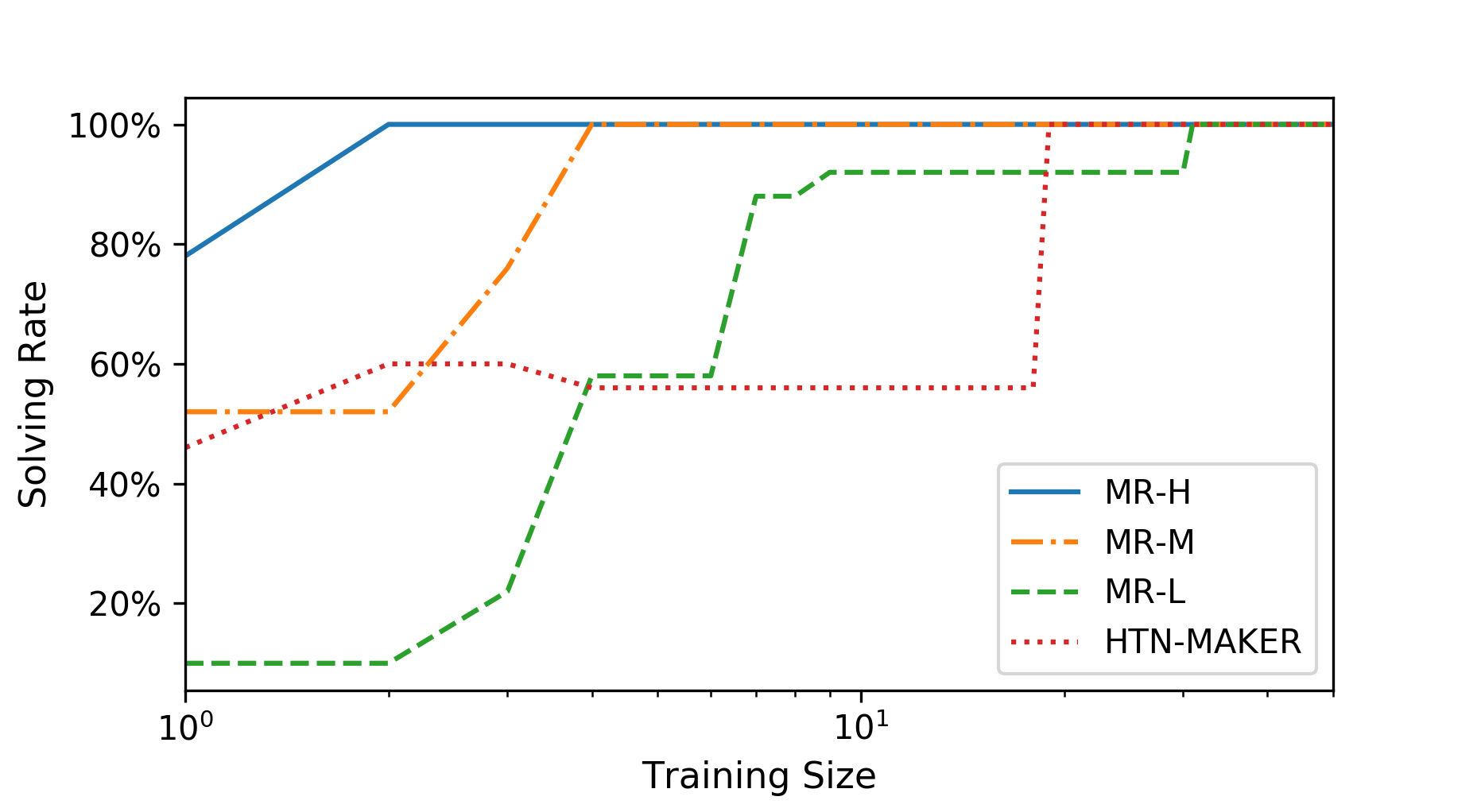}}
\subfigure[The Blocks-World Domain]{\includegraphics[width=0.33\textwidth]{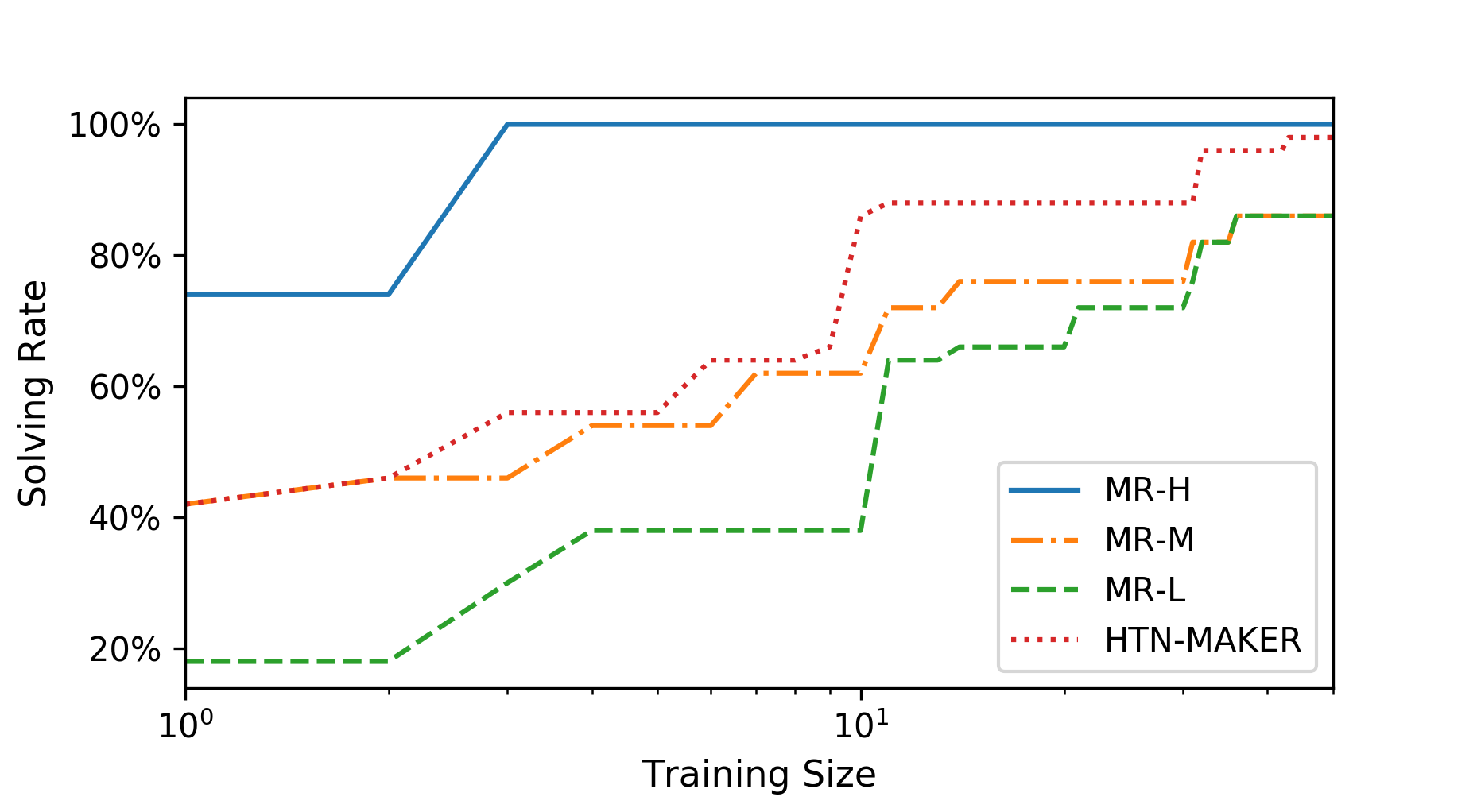}}
\caption{\small Experimental Results on the Solving Rate of \textsc{MethodRefine} with Different Domain Incompleteness and HTN-MAKER}\label{fig:experiment1}
\end{figure*}

\begin{figure*}[!htp]
\centering
\subfigure[The Logistics Domain]{\includegraphics[width=0.33\textwidth]{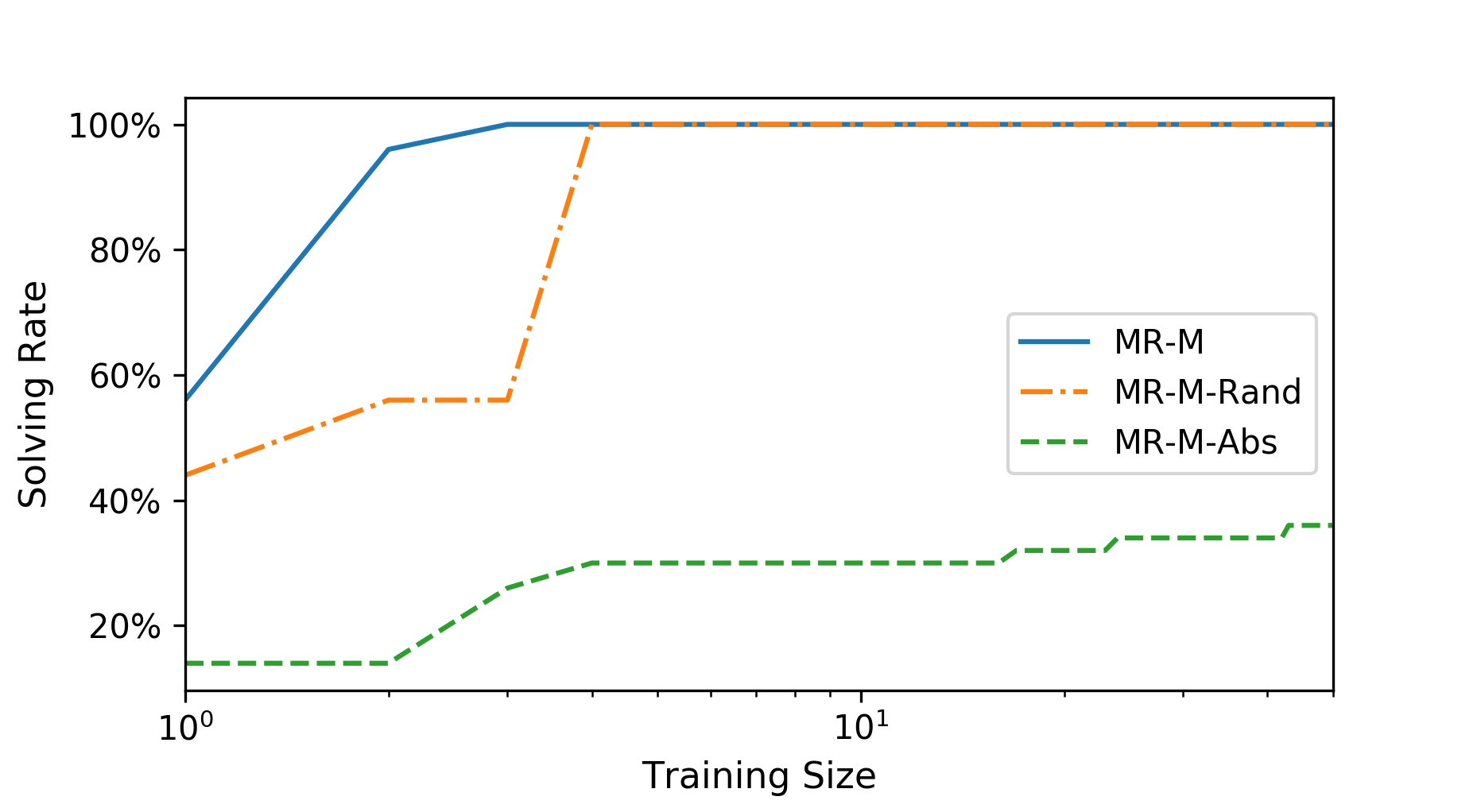}}
\subfigure[The Satellite Domain]{\includegraphics[width=0.33\textwidth]{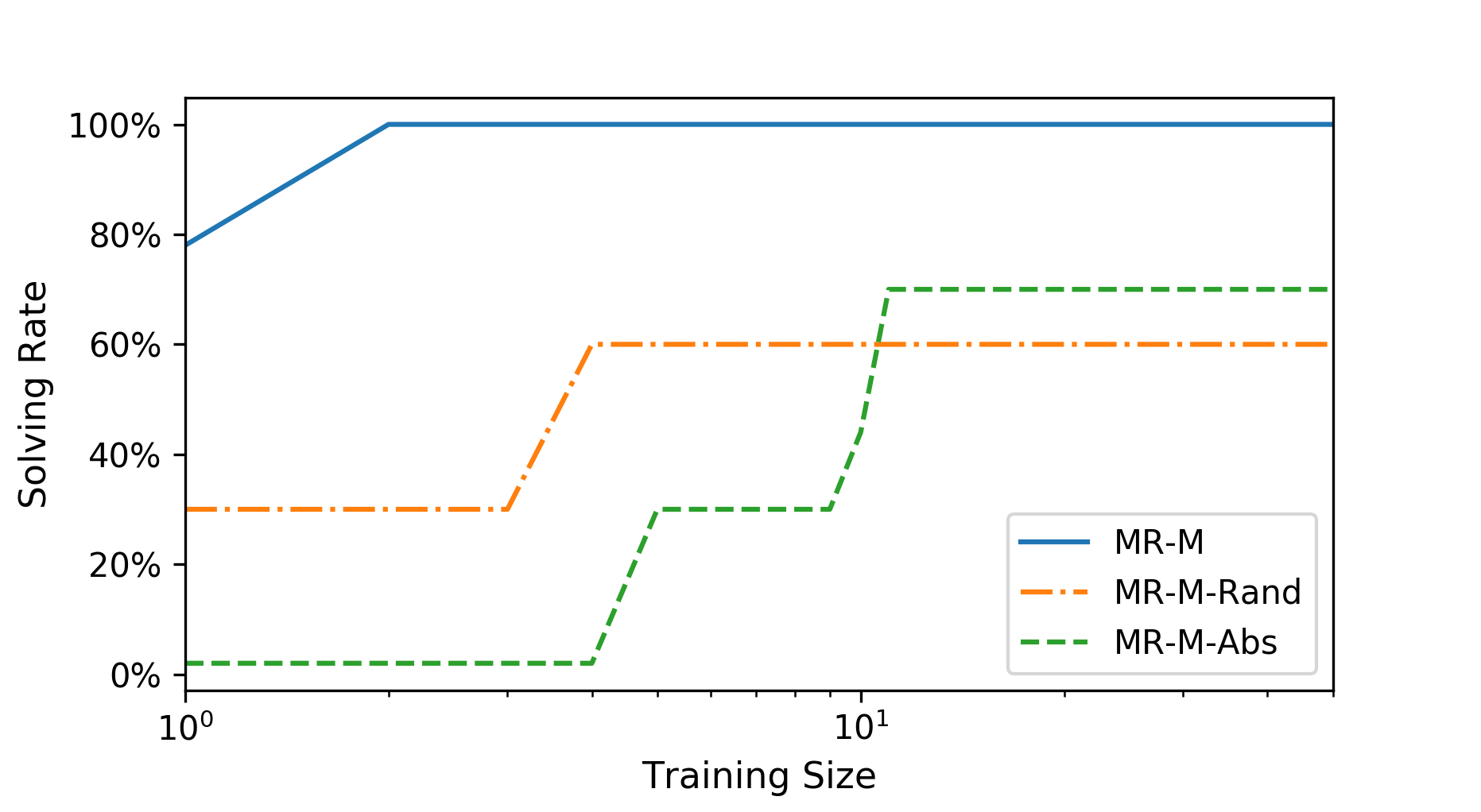}}
\subfigure[The Blocks-World Domain]{\includegraphics[width=0.33\textwidth]{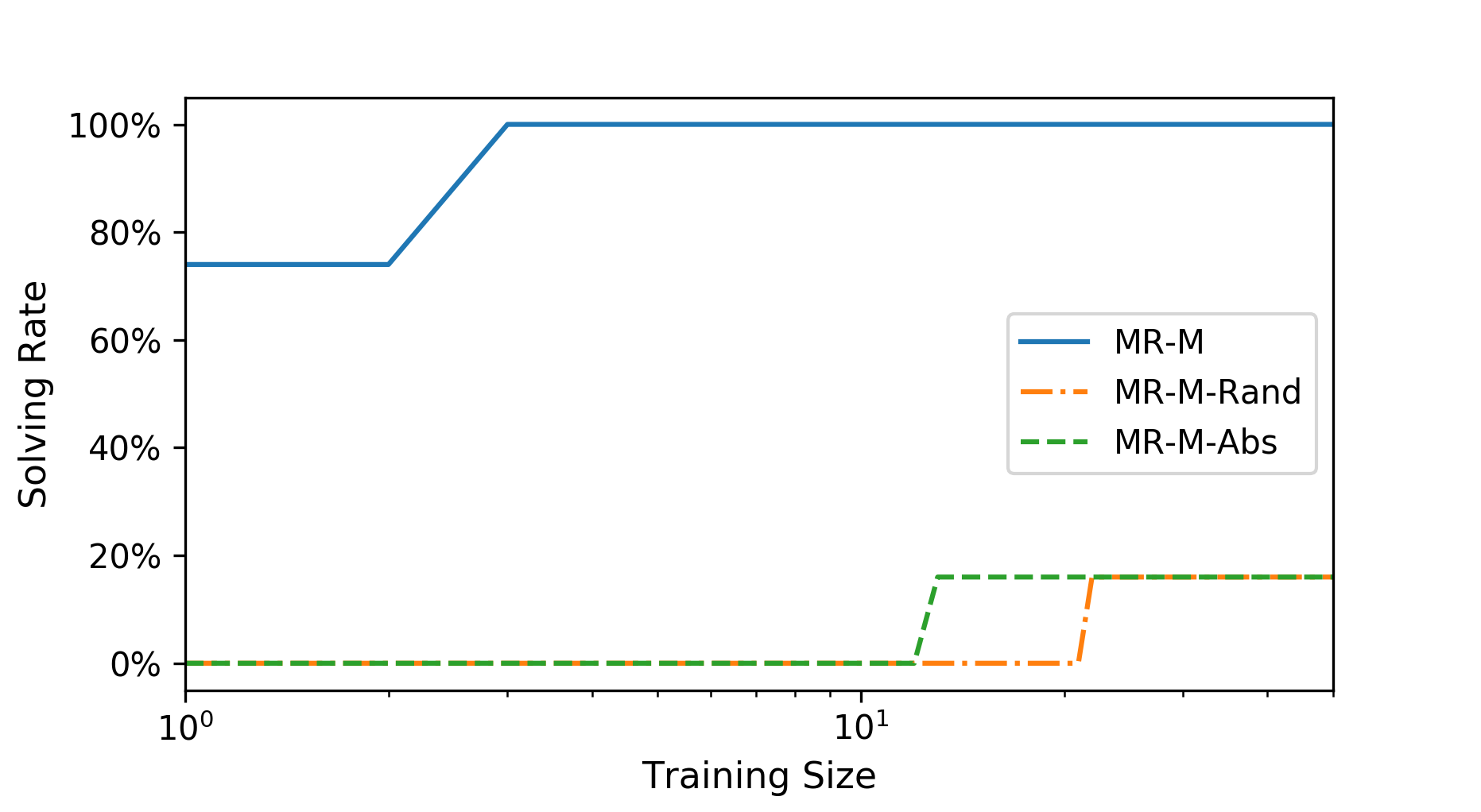}}
\caption{\small Experimental Results on the Solving Rate of \textsc{MethodRefine} with Different Preferences}\label{fig:experiment2}
\end{figure*}

\section{Experimental Analysis}
In this section, we evaluate \textsc{MethodRefine}\footnote{Available in \url{https://github.com/sysulic/MethodRefine}} in three well-known planning domains 
comparing with HTN-MAKER\footnote{\url{http://www.cse.lehigh.edu/InSyTe/HTN-MAKER/}} on the ability of solving new instances. 

We consider the three domains which HTN-MAKER uses: Logistics, Satellite, and Blocks-world.
We first get the problem generators from  International Planning Competition website\footnote{\url{http://ipc02.icaps-conference.org/}} and randomly generate 100 instances for each domain and take 50 instances as the training set and 50 instances as the testing set.
We run \textsc{MethodRefine} and HTN-MAKER with 50 instances growingly as input and obtain different learned method sets from these two approaches.
A testing instance is considered as solved, if its goal is achieved by a plan computed under the learned HTN method set via an HTN planner.
For HTN-MAKER we use the HTN planner from their website$^2$ and for our approach \textsc{MethodRefine} we still use our TIHTN planner with forbidding task insertion.
Experiments are run on the 3.00 GHz Intel i5-7400 with 8 GB RAM with a cutoff time of one hour.
In order to check if an instance is solved, we add a verifying action whose precondition is the goal and whose effect is empty in the last subtask of the initial task.
The learning performance is measured via the proportion of the solved instances on the testing instances, which is called \emph{solving rate}.

\textbf{Experimental results on comparing different domain incompleteness.}
First, we evaluate the influence of the different incompleteness of the given method sets on the solving rate.
To simulate the incomplete method set as the input of \textsc{MethodRefine}, we take the HTN domain descriptions in the website\footnote{\url{https://www.cs.umd.edu/projects/shop/}} of SHOP2 HTN planner, and remove different sets of subtasks from these domains. Then we consider three removal cases: 1) remove one primitive task from each method (if exists), with meaning the high completeness, noted by MR-H; 2) remove two primitive tasks from each method (if exists), noted by MR-M, with meaning the middle completeness; 3) remove one more compound task in some method of MR-L, noted by MR-L, with meaning the low completeness.
Taking the method set shown in Figure \ref{fig:example_of_incomplete_method} as example,
for MR-H, we remove the first \textsf{drive} and the first \text{fly} in the methods \textsf{cityShip} and \textsf{airShip}, respectively,
while for MR-M, we remove all \textsf{drive} and \text{fly} in the methods.
For MR-L, the first \textsf{cityShip} is additionally removed from the method of \textsf{ship} based on the MR-M setting.
As these domain are stratifiable, we use the stratum-based prioritized preference as the input of \textsc{MethodRefine}.
%
%

The experimental results are shown in Figure \ref{fig:experiment1}.
It demonstrates that the more complete the domain is, the better the learning performance is.
Generally, the solving rate increases along with the training set growing. 
For the Logistics and Satellite domains, in the settings of MR-H and MR-M, \textsc{MethodRefine} learns the necessary methods to solve all testing problems from a few instances.
It is because the structure of these two domains is relatively straightforward and the DTs still can be constructed by the incomplete method sets.
In the MR-L setting, the compound action removed in the Logistics Domain, \textsf{cityShip}, contains more arguments, making the learned methods become more case-specific, which cannot contribute to other instances.

MR-H eventually learns 2 methods which already effectively solve all testing instances on the three domains.
While MR-M and MR-L learn more than 10 methods in the Blocks-World domain but still fail to achieve the full solving rate.

\textbf{Experimental results on comparing \textsc{MethodRefine} against HTN-MAKER.}
From Figure \ref{fig:experiment1}, we observe that HTN-MAKER learns methods less slowly than \textsc{MethodRefine} with MR-H setting.
Comparing with the other two settings, HTN-MAKER is superior on the Blocks-World domain but inferior on the Logistics and Satellite domain.
In the Satellite domain, after training a number of instances, HTN-MAKER exceeds memory limitation when solving some instances, causing a sudden drop in its curve.
It is because HTN-MAKER learns a method which causes an infinitely recursive decomposition.
It never occurs in our approach because we only learn methods from acyclic decomposition trees.
But they are solved when suitable methods are learned with the training set growing.
HTN-MAKER finally learns 87, 23, 92 methods in the Logistics, Satellite, Blocks-World domain, respectively. Comparing with our approach, it learns more methods but many of which are redundant.
%
%

\textbf{Experimental results on the performances of \textsc{MethodRefine} with different preferences.}
To evaluate our assumption on the stratum-based prioritized preference, we also compare it against its opposite prioritization where tasks are inserted in as abstract compound tasks as possible (`MR-M-Abs'), and the case with no preference where any completion profile is allowed (`MR-M-rand').
We choose the domains with the middle completeness.

From the experimental results (Figure \ref{fig:experiment2}), our stratum-based prioritized preference outperforms the other two cases.
It is because the completion profile associates the inserted tasks to a more abstract task and it generates a more case-specific method which may not suit other instances.
\section{Discussion and Conclusion}
Without declarative goals,
we suppose that in the original method set, every compound action at least has a method to decompose.
Our approach can also accept a declarative goal: we can trivially introduce a compound action of achieving the goal which is decomposed into a ``verifying'' action whose precondition is the goal and whose effect is empty.

To sum up, we present a framework to help domain experts to improve HTN domains by refining methods.
The experiment results demonstrate that our approach outperforms the method learning approach, HTN-MAKER, given an appropriately incomplete method set as input.
It is also illustrated that the stratum-based prioritized preference is effective to solve new instances.

In this paper, we assume primitive action models are available. In the future, it would be interesting to remove this assumption and study learning HTN methods and action models simultaneously from history data \cite{DBLP:journals/ai/Zhuo014,DBLP:journals/ai/AinetoCO19}.
Also, in this paper we only focus on keeping the procedural knowledge from domain experts and it would be a promising avenue to jointly consider partial annotations of tasks or state constraints together \cite{XiaoHPWS17htn,DBLP:conf/aaai/Munoz-AvilaDR19}.
In what follows, we are going to study refining methods directly from raw data, such as texts \cite{DBLP:conf/ijcai/FengZK18} 
and to explore plan recognition techniques with incomplete action models \cite{DBLP:journals/tist/Zhuo19} to improve HTN domains.

\bibliographystyle{aaai}
\bibliography{ref}

\end{document}